\newcommand{\Bmath}[1]{\mbox{\bf {#1}}}
\newcommand{\Bc}{\Bmath{c}}
\newcommand{\BD}{\Bmath{D}}
\newcommand{\BI}{\Bmath{I}}
\newcommand{\BT}{\Bmath{T}}
\newcommand{\Bx}{\Bmath{x}}
\newcommand{\BX}{\Bmath{X}}
\newcommand{\By}{\Bmath{y}}
\newcommand{\Bw}{\Bmath{w}}
\newcommand{\BW}{\Bmath{W}}
\newcommand{\BV}{\Bmath{V}}
\newcommand{\BU}{\Bmath{U}}
\newcommand{\Bu}{\Bmath{u}}
\newcommand{\Bv}{\Bmath{v}}
\newcommand{\BF}{\Bmath{F}}
\newcommand{\BM}{\Bmath{M}}
\newcommand{\BQ}{\Bmath{Q}}
\newcommand{\Bm}{\Bmath{m}}
\newcommand{\BJ}{\Bmath{J}}
\newcommand{\BC}{\Bmath{C}}
\newcommand{\A}{{\cal A}}
\def\n{{\Bmath n}}
\def\I{{\Bmath I}}
\def\A{{\Bmath A}}
\def\C{{\Bmath C}}
\def\W{{\Bmath W}}
\newtheorem{theorem}{Theorem}[section]
\newtheorem{corollary}[theorem]{Corollary}
\newtheorem{lemma}[theorem]{Lemma}
\theoremstyle{definition}
\newtheorem{definition}[theorem]{Definition}
\newtheorem{remark}[theorem]{Remark}
\numberwithin{equation}{section}
\newcommand{\norm}[1]{\left\lVert#1\right\rVert}
\begin{document}


\title{
Linear Simple Cycle Reservoirs at the edge of stability perform Fourier decomposition of the input driving signals
} 



\author{Robert Simon Fong}
\address{School of Computer Science, University of Birmingham, Birmingham, B15 2TT, UK}
\email{r.s.fong@bham.ac.uk}
\author{Boyu Li}
\address{Department of Mathematical Sciences, New Mexico State University, Las Cruces, New Mexico, 88003, USA}
\email{boyuli@nmsu.edu}
\author{Peter Ti\v{n}o}
\address{School of Computer Science, University of Birmingham, Birmingham, B15 2TT, UK}
\email{p.tino@bham.ac.uk}


\date{\today}

\begin{abstract}

This paper explores the representational structure of linear Simple Cycle Reservoirs (SCR) operating at the edge of stability. We view SCR as providing in their state space feature representations of the input-driving time series. By endowing the state space with the canonical dot-product, we ``reverse engineer" the corresponding kernel (inner product) operating in the original time series space. The action of this time-series kernel is fully characterized by the eigenspace of the corresponding metric tensor.
We demonstrate that when linear SCRs are constructed at the edge of stability, the eigenvectors of the time-series kernel align with the Fourier basis. This theoretical insight is supported by numerical experiments.

\end{abstract}

\maketitle 


\textbf{
Recurrent Neural Networks (RNNs) are machine learning methods for modeling temporal dependencies in sequential data, but their training can be computationally demanding. Reservoir Computing (RC), a simplified subset of RNNs, circumvents this problem by fixing the internal dynamics of the network (the reservoir) and focusing training on the readout layer. Simple Cycle Reservoir (SCR) is a type RC model that stands out for its minimalistic design and proven capability to universally approximate a wide class of processes operating on time series data (namely time-invariant fading memory filters) even in the linear dynamics regime (and non-linear static readouts). 
We show that, interestingly, when linear SCR is constructed at the edge of stability, it implicitly represents the time series according to a well known and widely used technique of Fourier signal decomposition.
This insight demonstrates that deep 
connections can exist between recurrent neural networks, classical signal processing techniques and statistics, paving the way for their enhanced understanding and innovative applications.
}

\section{Introduction}

Recurrent Neural Networks (RNNs) are input-driven parametric state-space machine learning models designed to capture temporal dependencies in sequential input data streams. They encode time series data into a latent state space, dynamically storing temporal information within state-space vectors.

Reservoir Computing (RC) models is a subset of RNNs that operate with a fixed, non-trainable input-driven dynamical system (known as the {\em reservoir}) and a static trainable readout layer producing model responses based on the reservoir activations. This design uniquely simplifies the training process by concentrating adjustments solely on the readout layer (thus avoiding back-propagating the error information backwards through time), leading to enhanced computational efficiency. The simplest implementations of RC models include Echo State Networks (ESNs) \cite{Jaeger2001, Maass2002, Tino2001, Lukoservicius2009}. 

Simple Cycle Reservoirs (SCR) represent a specialized class of RC models characterized by a single degree of freedom in the reservoir construction (modulo the state space dimensionality), structured through uniform ring connectivity and binary input weights with an aperiodic sign pattern. Recently, SCRs were shown to be universal approximators of time-invariant dynamic filters with fading memory over $\mathbb{C}$ and $\mathbb{R}$ respectively in \cite{li2023simple, fong2024universality}, making them highly suitable for integration in photonic circuits for high-performance, low-latency processing \cite{bauwens2022using, larger2017high, harkhoe2020demonstrating}. 


Understanding the intricacies of SCRs in depth is essential. In this work, we employ the kernel view of linear ESNs introduced in \cite{tino2020dynamical}, in which the state-space `reservoir' representation of (potentially left-infinite) input sequences is treated as a feature map corresponding to the given reservoir model through the associated reservoir kernel. For linear reservoirs, the canonical dot product of two input sequences’ feature representations is analytically expressible as the (semi-)inner product of the sequences themselves. The corresponding metric tensor reveals the representational structure imposed by the reservoir on the input sequences, in particular in the metric tensor's eigenspace containing dominant projection axes (time series 'motifs') and scaling  (`importance' factors).

To assess the ``richness" of linear SCR state-space representations, \cite{tino2020dynamical} proposed analyzing the relative area of motifs under the Fast Fourier Transform (FFT). It was observed that the richness of these representations collapses at the edge of stability when the spectral radius $\rho$ of the dynamic coupling matrix equals to 1.

In this paper we theoretically analyze the collapse of motif richness at the edge of stability and show that 
when $\rho = 1$ the SCR kernel motifs  correspond to Fourier basis. We begin by reviewing the notion of SCRs \cite{rodan2010minimum}, kernel view of ESNs \cite{tino2020dynamical}, and Reservoir Motif Machines \cite{tino2024} in Section \ref{sec:linear_motifs}. The contribution of this paper, in the subsequent sections, are outlined as follows:

\begin{enumerate}
    \item In Section~\ref{sec:harmonic}, we show in $\mathbb{C}$ that motifs of linear SCR at the edge of stability are harmonic functions. 
    \item In Section~\ref{sec:motifcount}, we show in $\mathbb{R}$ that $n$ dimensional linear SCR has $\lceil \frac{n}{2} \rceil$ symmetric motifs and $\lfloor \frac{n}{2} \rfloor$  skew-symmetric motifs.
    \item In Section~\ref{sec:fourier_basis}, we combine the results of the previous two sections and demonstrate numerically that in $\mathbb{R}$, the motifs of linear SCR at edge of stability are exactly the columns of real Fourier basis matrix. 
    \item Finally in Section~\ref{sec:exp}, we conclude the paper with numerical experiments supporting our findings. 
\end{enumerate}

\section{Simple Cycle Reservoir and its temporal kernel}
\label{sec:linear_motifs}

Let $\mathbb{K} = \mathbb{R}$ or $\mathbb{C}$ be a field. We first formally define the principal object of our study - parametrized linear driven dynamical system with a (possibly) non-linear readout.

\begin{definition}\label{def.lrc} A \textbf{linear reservoir system} over $\mathbb{K}$ is 
is the triplet $R:= (\BW,\BV,h)$ where the \textbf{dynamic coupling} $\BW \in \mathbb{M}_{n\times n}\left(\mathbb{K}\right)$ is an $n\times n$ matrix over $\mathbb{K}$, the \textbf{input-to-state coupling} $\BV \in \mathbb{M}_{n\times m}\left(\mathbb{K}\right)$ is an $n\times m$ matrix, and the state-to-output mapping (\textbf{readout}) $h:\mathbb{K}^n \to \mathbb{K}^d$ is a (trainable) continuous function. 

The corresponding linear dynamical system is given by:
\begin{equation} \label{eq.system}
   \begin{cases} \Bx_t &= \BW \Bx_{t - 1} + \BV \Bc_t \\
    \By_t &= h(\Bx_t)
    \end{cases}
\end{equation}
where $\{\Bc_t\}_{t\in\mathbb{Z}_-} \subset \mathbb{K}^m$, $\{\Bx_t\}_{t\in\mathbb{Z}_-} \subset \mathbb{K}^n$, and $\{\By_t\}_{t\in\mathbb{Z}_-} \subset \mathbb{K}^d$ are the external inputs, states and outputs, respectively.
We abbreviate the dimensions of $R$ by $(n,m,d)$.

We make the following assumptions for the system:
\begin{enumerate}
    \item $W$ is assumed to be strictly \textbf{contractive}. In other words, its operator norm $\norm{W}<1$. The system \eqref{eq.system} thus satisfies the fading memory property (FMP) \cite{li2023simple}.
    \item We assume the input stream is $\{\Bc_t\}_{t\in\mathbb{Z}_-}$ is \textbf{uniformly bounded}. In other words, there exists a constant $M$ such that $\norm{\Bc_t}\leq M$ for all $t\in\mathbb{Z}_-$. 
\end{enumerate}
 {The contractiveness of $W$ and the uniform boundedness of input stream imply that the images 
 $\Bx \in \mathbb{K}^n$ of the inputs $\Bc \in (\mathbb{K}^m)^{\mathbb{Z}_-}$ under the linear reservoir system live in a compact space $X \subset{\mathbb{K}}^n$. With slight abuse of mathematical terminology we call $X$ a \textbf{state space}.}
\end{definition}

\begin{definition} Let $\C=[c_{ij}]$ be an $n\times n$ matrix. We say $\C$ is a \textbf{permutation matrix} if there exists a permutation $\sigma$ in the symmetric group $S_n$ 
such that 
\[c_{ij}=\begin{cases} 1, &\text{ if }\sigma(i)=j, \\0, &\text{ if otherwise.}\end{cases}\]
We say a permutation matrix $\C$ is a \textbf{full-cycle permutation}\footnote{Also called left circular shift or cyclic permutation in the literature.} if its corresponding permutation $\sigma\in S_n$ is a cycle permutation of length $n$.  Finally, a matrix $\BW=\rho \cdot \C$ is called a \textbf{contractive full-cycle permutation} if $\rho\in(0,1)$ and $\C$ is a full-cycle permutation. 
\end{definition}

The idea of simple cycle reservoir was presented in \cite{rodan2010minimum} as a reservoir system with a very small number of design degrees of freedom, yet retaining performance capabilities of more complex or (unnecessarily) randomized constructions. In fact,
it can be shown that even with such a drastically reduced design complexity, SCR models are universal approximators of fading memory filters \cite{li2023simple, fong2024universality}.

\begin{definition}
\label{def.rc}
    A linear reservoir system $R = \left(\BW,\Bw,h\right)$ with dimensions $(n,m,d)$ is called a \textbf{Simple Cycle Reservoir (SCR) }
    \footnote
     {We note that the assumption on the aperiodicity of the sign pattern in $V$ is not required for this study}
    if 
    \begin{enumerate}
        \item $\BW$ is a contractive full-cycle permutation, and
        \item $\Bw \in \mathbb{M}_{n \times m}\left(\left\{-1,1\right\}\right)$. 
    \end{enumerate}
\end{definition}

One possibility to understand inner representations of the input-driving time series forming inside the reservoir systems is to view the reservoir state space as a temporal feature space of the associated reservoir kernel \cite{tino2020dynamical,gonon2022reservoir} 
.
Consider a linear reservoir system $R = (\BW,\Bw,h)$ over $\mathbb{K}$ with dimensions $(n,1,d)$ operating on univariate input. 


Let $\tau > n$ denote the length of the look back window and consider two sufficiently long time series of length $\tau>n$,

\begin{align*}
\Bu &= \left(u\left(-\tau + 1\right), u\left(-\tau + 2\right),\ldots,u\left(-1\right),u\left(0\right)\right)\\
    &=: \left(u_1,u_2,\ldots,u_\tau\right) \in \mathbb{K}^\tau
\end{align*}
and
\begin{align*}
\Bv &= \left(v\left(-\tau + 1\right), v\left(-\tau + 2\right),\ldots,v\left(-1\right),v\left(0\right)\right)\\
    &=: \left(v_1,v_2,\ldots,v_\tau\right) \in \mathbb{K}^\tau
\end{align*}
we consider the reservoir states reached upon reading them (with zero initial state) their feature space representations \cite{tino2020dynamical}:
\[
\phi(\Bu) = \sum_{j=1}^\tau u_j \BW^{\tau - j}\Bw, \ \ \ 
\phi(\Bv) = \sum_{j=1}^\tau v_j \BW^{\tau - j}\Bw.
\]
The canonical dot product (reservoir kernel)
\[
K(\Bu,\Bv) = \langle \phi(\Bu),\phi(\Bv) \rangle
\]
can be written in the original time series space as a semi-inner product
$\langle \Bu,\Bv \rangle_{Q} = \Bu^\top \BQ \Bv$, where
\begin{equation}
\label{eqn:Q}
Q_{i,j} = \Bw^\top \left(\W^\top{}\right)^{i-1} \BW^{j-1} \Bw.
\end{equation}

Since the (semi-)metric tensor $\BQ \in \mathbb{M}_{\tau\times\tau}(\mathbb{K})$ is symmetric and positive semi-definite, it admits the following eigen-decomposition:
\begin{align}
\label{eqn:m}
    \BQ = \BM \Lambda_Q \BM^\top ,
\end{align}
where 
$\Lambda_Q := \operatorname{diag}\left(\lambda_1,\lambda_2,\ldots,\lambda_{N_m} \right)$ 
is a diagonal matrix consisting of non-negative eigenvalues of $\BQ$ with the corresponding eigenvectors $\Bm_1,\Bm_2,\ldots,\Bm_{N_m} \in \mathbb{K}^\tau$
(columns of $\BM$). The $N_m := \operatorname{rank}\left(Q\right) \leq N \leq \tau$ eigenvectors of $\BM$ with positive eigenvalues are called the {\em motifs} of $R$. 
We have:
\[
K(\Bu,\Bv) =
\left(\Lambda_Q^{\frac{1}{2}} \BM^\top \Bu\right)^\top \left(\Lambda_Q^{\frac{1}{2}} \BM^\top \Bv\right).
\]

In particular, the reservoir kernel is a canonical dot product 
of time series projected onto the motif space spanned by $\left\{\Bm_i\right\}_{i=1}^{N_m}$:
\[
K(\Bu,\Bv) =
\langle \tilde \Bu, \tilde \Bv \rangle,
\]
where
\begin{align}
\label{eqn:rc_filterview}
\tilde \Bu &= 
\Lambda_Q^{\frac{1}{2}} \BM^\top \Bu \\ \nonumber 
&= 
\begin{bmatrix}
     \lambda^{\frac{1}{2}}_1 \cdot \langle \Bm_1, \Bu \rangle \\
     \vdots \\
     \lambda^{\frac{1}{2}}_{N_m} \cdot \langle \Bm_{N_m}, \Bu\rangle 
\end{bmatrix} \\ \nonumber 
& =
\left(\lambda_i^{\frac{1}{2}} \cdot \langle \Bm_i,\Bu \rangle \right)_{i=1}^{N_m} \in \mathbb{M}_{N_m}(\mathbb{K}).
\end{align}

\textbf{Reservoir Motif Machine} (RMM)\cite{tino2024} is a predictive model motivated by the kernel view of linear Echo State Networks described above. By projecting the $\tau$-blocks of input time series onto the reservoir motif space given by $\operatorname{span}(\{\Bm_i \}_{i=1}^{N_m})$, RMM captures temporal and structural dynamics in a computationally efficient feature map.

In particular, rather than relying on the motif weights $\lambda_i^{\frac{1}{2}}$ determined by the reservoir $R$ in Equation~\eqref{eqn:rc_filterview}, RMM introduces a set of adaptable motif coefficients, denoted as $C := \{ c_i \in \mathbb{R} \}_{i=1}^{N_m}$, to define its feature map as follows:

\begin{align}
\label{eqn:conv_layer}
\varphi\left(\Bu;C\right)
&= 
\begin{bmatrix}
     c_1 \cdot \langle \Bm_1, \Bu \rangle \\
     \vdots \\ \nonumber
     c_{N_m} \cdot \langle \Bm_{N_m}, \Bu\rangle 
\end{bmatrix} \\
& =
\left(c_i \cdot \langle \Bm_i,\Bu \rangle \right)_{i=1}^{N_m} \in \mathbb{R}^{N_m}.
\end{align}

This feature map is used to train a predictive model, such as linear regression or kernel-based methods, directly in the motif space.

\begin{remark}
    \label{rmk:ntau}
To streamline the theoretical analysis of SCR kernels, in line with \cite{tino2020dynamical}, we assume that the length of the look-back window (past horizon) $\tau$ is an integer multiple of the dimension of the state space $n$, i.e. there exists $k \in \mathbb{N}_+$ such that $\tau = k\cdot n$. In particular, denoting by $\Bm$ a SCR motif calculated with $\tau = n$, it is shown in \cite{tino2020dynamical} that when 
$\tau = k\cdot n$, the corresponding motif
is a concatenation of $k$ copies of $\Bm$, scaled by $\rho^{l\cdot n}$, $l=0,1,2,...,k-1$. Recall that $\rho$ is the spectral radius of the dynamic coupling $W$. Hence, to study SCR motifs with past horizon $\tau = k\cdot n$, for any $k \in \mathbb{N}_+$, it is sufficient to study only the base case $\tau = n$.
\end{remark}

\section{On the edge of stability of Simple Cycle Reservoirs}

Having defined the reservoir temporal kernel, one can ask how ``representationally rich'' is the associated feature space (span of the reservoir motifs).  To quantify the `richness' of the reservoir feature space 
of a linear reservoir system over $\mathbb{R}$, \cite{tino2020dynamical} proposed the following procedure:

Consider a linear reservoir system $R = (\BW,\Bw,h)$ with dimensions 
$(n,1,d)$ over $\mathbb{R}$. Suppose $\BW$ has spectral radius $\rho$.
Recall from Remark~\ref{rmk:ntau} that: to study the SCR motif structure of, it is sufficient to consider the past horizon $\tau = n$.The motif matrix $\BM \in \mathbb{M}_{n\times n}\left(\mathbb{R}\right)$ is constructed according to Equation~\eqref{eqn:m} from the matrix $\BQ$ (metric tensor of the inner product of the reservoir kernel).

First, Fast Fourier Transform (FFT) is applied to the kernel motifs (columns of $\BM$), considering only those with motif weights upto a threshold of $10^{-2}$ of the highest motif weight. This yields an $n \times n'$ matrix of Fourier coefficients over $\mathbb{C}$ with $n' \leq n$. These Fourier coefficients are then collected in a (multi)set $Z:= \left\{z_k\right\}_{k=1}^{n \cdot n'} $.

To evaluate the diversity and spread of the Fourier coefficients in the complex plane, \cite{tino2020dynamical} proposed calculating the coarse-grained area occupied by $Z \subseteq \mathbb{C}$. In particular, 
the box $\left[-7, 7 \right]^2$ in the complex plane is partitioned into a grid of cells (following \cite{tino2020dynamical} we use  side length $0.05$). The \textbf{relative area} covered by $Z \subseteq \mathbb{C}$ is defined as the ratio of the number of cells visited by the coefficients $z_k \in Z$ to the total number of cells in the grid. An example of the distribution of Fourier coefficients $Z$ of linear SCR with $n = 97$ at $\rho = 0.9813, 0.999, 1$ and $\Bw$ being the first $n$ digits of binary expansion of $\pi$ is presented in Figure~\ref{fig:fourier_coeff} \footnote{In particular, $\rho = 0.9812798473475446$ is where the relative area peaks at Figure~\ref{fig:relative_area}.}. 

\begin{figure}[ht!]
    \centering
    \begin{subfigure}{0.3\textwidth}
        \centering
        \includegraphics[width=1.1\linewidth]{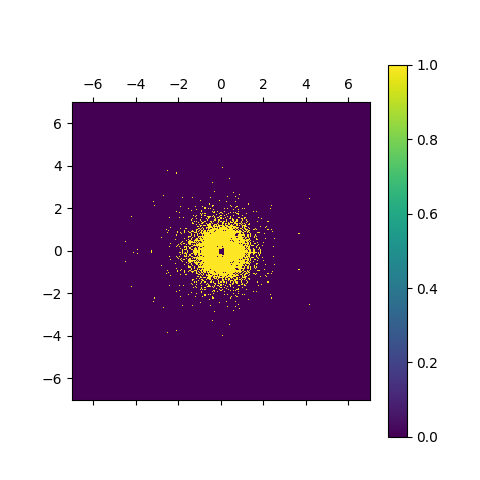}
        \caption{$\rho = 0.9812798473475446$}
    \end{subfigure}
    \hspace{0.01\textwidth} 
    \begin{subfigure}{0.3\textwidth}
        \centering
        \includegraphics[width=1.1\linewidth]{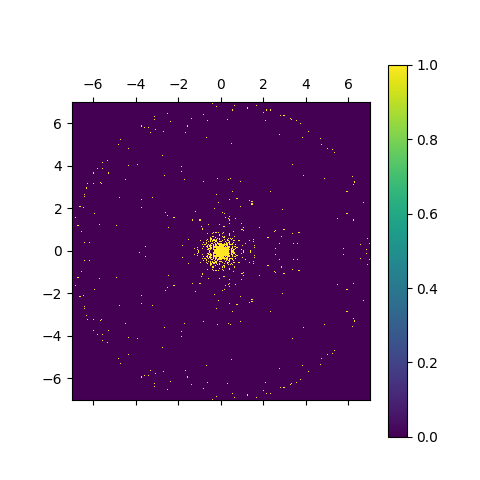}
        \caption{$\rho = 0.999$}
    \end{subfigure}
    \hspace{0.01\textwidth} 
    \begin{subfigure}{0.3\textwidth}
        \centering
        \includegraphics[width=1.1\linewidth]{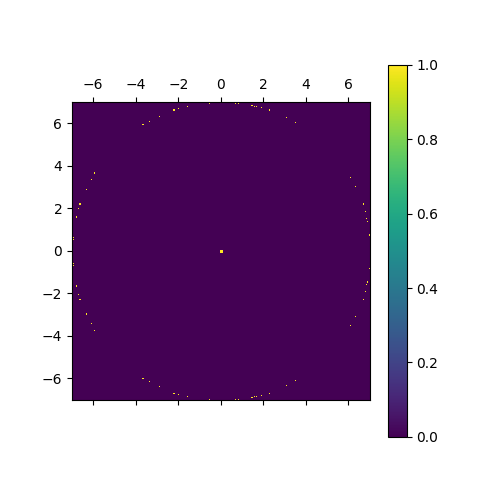}
        \caption{$\rho = 1$}
    \end{subfigure}
    \caption{Example of Fourier coefficient of linear SCR at $\rho = 0.9812798473475446, 0.999, 1$ respectively. $0.9812798473475446$ in particular is where the relative area peaks in Figure~\ref{fig:relative_area}.}
    \label{fig:fourier_coeff}
\end{figure}

\begin{figure}[ht!]
\centering
\includegraphics[scale = 0.9]{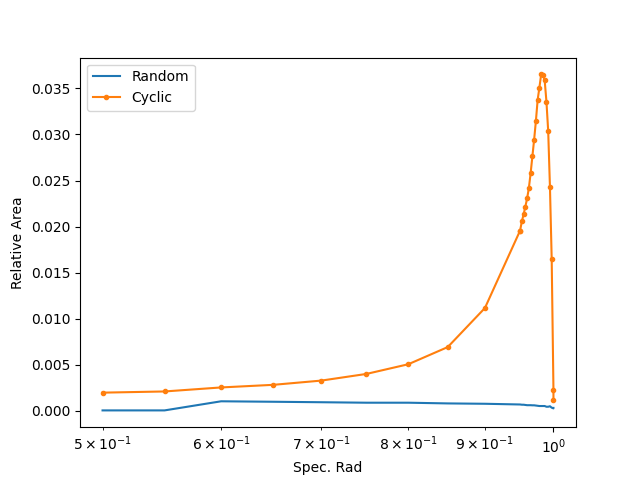}
\caption{Relative area of Linear SCR and Randomly generated Reservoir with respect to the spectral radius}
\label{fig:relative_area}
\end{figure}

Replicating the experiment in \cite{tino2020dynamical} in Figure~\ref{fig:relative_area}, we observe that the ``richness" of the motif space of SCR, as measured by relative area under the current setup, increases as the spectral radius approaches approximately $\rho \approx 0.9813$. Beyond this point, the measure sharply declines, aligning with the results for a randomly generated reservoir. This decline is also observed in  is also shown in Figure~\ref{fig:relative_area}, as $\rho$ increases from approximately $0.98$ (the peak of Figure~\ref{fig:relative_area}) to the edge of stability at $\rho =1$.

In this paper, we aim to investigate this collapse of representational richness of SCR models at the edge of stability $\rho = 1$.
In particular, we will show that at the edge of stability, 
the motifs of linear SCR are (sampled) harmonic functions.

The next two section will be a two-part exposition of the properties of the motifs of linear SCR:
\begin{enumerate}
    \item In Section~\ref{sec:harmonic}, we show in $\mathbb{C}$ that the motifs are harmonic functions. Following the approach of \cite{li2023simple}, we begin with unitary dynamical coupling and progress to cyclic permutations.
    \item In Section~\ref{sec:motifcount}, we demonstrate in $\mathbb{R}$ that the number of symmetric motifs is $\lceil \frac{n}{2} \rceil$, while the number of skew-symmetric motifs is $\lfloor \frac{n}{2} \rfloor$. In line with \cite{fong2024universality}, we start with orthogonal coupling and then move onto cyclic permutation.
\end{enumerate}

Combining these two results, we conclude that at $\rho =1$, the motifs alternate between real and imaginary components of the first Fourier basis, which correspond to cosines and sines, respectively. We supplement our theoretical findings with numerical simulations of the motif space of SCR in the next section, which then lead to the numerical experiments in the final section.

\section{Unit spectral radius SCR implies harmonic motifs in Complex Domain}
\label{sec:harmonic}

We first show that, in the complex domain $\mathbb{C}$, the motifs of SCR can be derived explicitly. In particular, in this section we set $\mathbb{K} = \mathbb{C}$ and show that when the spectral radius $\rho = 1$, the motifs of linear SCR are \textit{harmonic}, i.e. they are precisely the Fourier basis (columns of the Fourier matrix).

Consider a linear reservoir system $R = (\BW,\Bw,h)$ over $\mathbb{C}$
with dimensions $(n,1,d)$ and $\BW$ of spectral radius $\rho$. 
Let $\BQ_\rho$ denote the metric tensor of the reservoir kernel (Eq.~\eqref{eqn:Q}. 

In the spirit of \cite{li2023simple}, we begin by considering a more general setting of $\BW=\rho \BU$, where $\BU$ is a unitary matrix in $\mathbb{M}_{n\times n}(\mathbb{C})$
 (i.e. $\BU\BU^*=\BU^*\BU=\I$) and  $\rho\in(0,1]$. We then move to the special case where $\BU = \BC$ is a cyclic permutation. Since $\BU$ is unitary, its eigenvalues all have norm $1$. We let its eigenvalues be $\{s_j\in\mathbb{C}: 1\leq j\leq n\}$ with corresponding eigenvectors $\{\xi_j: 1\leq j\leq n\}$, and we know each $|s_j|=1$. 

By construction:
\[\BQ_\rho = [Q_{\rho \ (i,j)}]_{1\leq i,j\leq \tau} = [\mathbf{w}^* \BW^{*(i-1)} \BW^{j-1} \mathbf{w}]_{1\leq i,j\leq \tau}.\]

The matrix $\BQ_\rho$ is a $\tau\times \tau$ matrix. Denote
\[\BX_\rho = [\BW^{*(i-1)} \BW^{j-1}]_{1\leq i,j\leq \tau},\]
and,
\[\hat{\mathbf{w}} = \begin{bmatrix} \mathbf{w} &&& \\ & \mathbf{w} && \\ && \ddots & \\ &&& \mathbf{w}\end{bmatrix}.\]
Notice that $\BX_\rho$ is an $\tau n\times \tau n$ matrix while $\hat{\mathbf{w}}$ is an $\tau n\times \tau$ matrix. Then by construction, $\BQ_\rho = \hat{\mathbf{w}}^* \BX_\rho \hat{\mathbf{w}}$. Notice that since $\BW = \rho \BU$ and $\BU$ is unitary, we can rewrite $\BX_\rho$ as follows:
\[\BX_\rho = [\rho^{i+j-2} \BU^{j-i}]_{1\leq i,j\leq \tau} = \begin{bmatrix}
\I & \rho \BU & \rho^2 \BU^2 & \cdots & \rho^{\tau-1} \BU^{\tau-1} \\
\rho \BU^* & \rho^2 \I & \rho^3 \BU & & \vdots \\
\rho^2 \BU^{*2} & \rho^3 \BU^* & \rho^4 \I &  & \\
\vdots & & & \ddots & \\
\rho^{\tau-1} \BU^{*(\tau-1)} & & & & \rho^{2(\tau-1)}\I
\end{bmatrix}\]
Finally, we let $\lambda = 1+\rho^2 + \rho^4 + \cdots + \rho^{2(\tau-1)} = \begin{cases}
    \tau, & \text{ if } \rho=1\\
    \frac{1-\rho^{2\tau}}{1-\rho^2}, & \text{ if } \rho<1
\end{cases}.$

With the ultimate goal of computing the eigen-decomposition of $\BQ_\rho$, we first characterize the eigenvalues and eigenvectors of $\BX_\rho$. For the eigenvalues 
we first observe the following:

\begin{lemma}\label{lm.iv.1} The matrix $\BX_\rho$ satisfies $\BX_\rho^2 = \lambda \BX_\rho$.
\end{lemma}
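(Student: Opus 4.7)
The plan is to verify the identity by a direct block multiplication, exploiting the fact that the matrix entries of $\BX_\rho$ in block form factor cleanly into a scalar times a power of the unitary $\BU$, and that the powers of $\BU$ telescope.

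First I would index the computation by blocks of size $n\times n$. Writing $(\BX_\rho)_{ij} = \rho^{i+j-2}\BU^{j-i}$ for $1\le i,j\le \tau$, the $(i,j)$-block of the product $\BX_\rho^2$ is
\begin{equation*}
(\BX_\rho^2)_{ij} \;=\; \sum_{k=1}^{\tau} (\BX_\rho)_{ik}\,(\BX_\rho)_{kj}
\;=\; \sum_{k=1}^{\tau} \rho^{i+k-2}\BU^{k-i}\cdot \rho^{k+j-2}\BU^{j-k}.
\end{equation*}
The crucial observation is that $\BU^{k-i}\BU^{j-k} = \BU^{j-i}$ is independent of the summation index $k$, so it pulls out of the sum, leaving only a scalar geometric sum in $\rho^2$.

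Carrying this out, the block equals
\begin{equation*}
\BU^{j-i}\,\rho^{i+j-2}\sum_{k=1}^{\tau}\rho^{2k-2}
\;=\; \lambda\,\rho^{i+j-2}\BU^{j-i}
\;=\; \lambda\,(\BX_\rho)_{ij},
\end{equation*}
by the very definition of $\lambda = 1+\rho^2+\cdots+\rho^{2(\tau-1)}$. Since this holds for every block index $(i,j)$, we conclude $\BX_\rho^2 = \lambda \BX_\rho$.

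There is no substantive obstacle here: the argument is a single block-wise computation, and the only feature one uses is that $\BU$ is invertible with $\BU^{-1} = \BU^*$, so negative powers of $\BU$ make sense and combine additively. Unitarity beyond invertibility is not even needed for this particular lemma (it will matter later when $\BQ_\rho = \hat{\mathbf{w}}^*\BX_\rho\hat{\mathbf{w}}$ is diagonalized via the unitary eigenbasis of $\BU$). I would therefore present the lemma as a one-line block calculation, highlighting that $\lambda$ is precisely the geometric series that appears, which foreshadows why $\lambda$ will reappear as the repeated nonzero eigenvalue of $\BX_\rho$ in the subsequent spectral analysis.
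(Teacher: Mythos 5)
Your proof is correct and takes essentially the same approach as the paper: a direct block-wise multiplication in which the powers of $\BU$ telescope and the scalar geometric sum produces $\lambda$. The only differences are a shifted summation index ($k=1,\dots,\tau$ versus $k=0,\dots,\tau-1$) and your side remark that invertibility of $\BU$ alone suffices here, both of which are immaterial.
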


\begin{proof} Multiplying the $i$-th row of $\BX_\rho$ with its $j$-th column, we obtain:
\[\sum_{k=0}^{\tau-1} \rho^{i+k-1} \BU^{-(i-k-1)} \cdot \rho^{j+k-1} \BU^{j-k-1} = \left(\sum_{k=0}^{\tau-1} \rho^{2k}\right) \rho^{i+j-2} \BU^{j-i} = \lambda \rho^{i+j-2} \BU^{j-i}.\]
This proves the desired equality. 
%
\end{proof}

As a result, we can now fully characterize the \textit{eigenvalues} of $\BX_\rho$. 

\begin{corollary}\label{cor.iv.2} The eigenvalues of $\BX_\rho$ are either $0$ or $\lambda$. Moreover, the multiplicity of the eigenvalue $\lambda$ is $n$.     
\end{corollary}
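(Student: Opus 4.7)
The plan is to exploit the quadratic relation from Lemma~\ref{lm.iv.1} to pin down the spectrum, and then compute the rank of $\BX_\rho$ in a short separate step. First, I would apply $\BX_\rho^2 = \lambda \BX_\rho$ to any eigenvector $v$ with eigenvalue $\mu$, obtaining $\mu^2 v = \lambda \mu v$, hence $\mu(\mu - \lambda) = 0$. Since $\rho \in (0,1]$ forces $\lambda > 0$, the only possible eigenvalues are $0$ and $\lambda$. Moreover, because these two roots are distinct, the minimal polynomial of $\BX_\rho$ divides the squarefree polynomial $x(x - \lambda)$, so $\BX_\rho$ is diagonalizable and the algebraic multiplicity of $\lambda$ coincides with $\operatorname{rank}(\BX_\rho)$.

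Second, I would compute $\operatorname{rank}(\BX_\rho) = n$ in one of two equivalent ways. The slick route is to note that $\BP := \lambda^{-1}\BX_\rho$ is idempotent (again by Lemma~\ref{lm.iv.1}), and idempotents satisfy $\operatorname{rank}(\BP) = \operatorname{tr}(\BP)$. The trace of $\BX_\rho$ is the sum of the traces of its diagonal blocks $\rho^{2(i-1)}\I$, $i = 1,\ldots,\tau$, giving $n \sum_{i=0}^{\tau - 1}\rho^{2i} = n\lambda$, so $\operatorname{rank}(\BP) = n$.

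A more concrete alternative, which I would probably include for transparency, is to exhibit the factorization $\BX_\rho = \BY^* \BY$, where
\[
\BY = \begin{bmatrix} \I & \rho \BU & \rho^2 \BU^2 & \cdots & \rho^{\tau - 1} \BU^{\tau - 1} \end{bmatrix}
\]
is an $n \times \tau n$ block-row matrix. Block multiplication immediately reproduces the $(i,j)$-block $\rho^{i + j - 2}\BU^{j - i}$ of $\BX_\rho$, and the presence of the initial $\I$ block shows $\operatorname{rank}(\BY) = n$, whence $\operatorname{rank}(\BX_\rho) = \operatorname{rank}(\BY^* \BY) = n$. Combined with the diagonalizability step, this forces the multiplicity of $\lambda$ to be exactly $n$ (and consequently the multiplicity of $0$ to be $n(\tau - 1)$).

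The main obstacle here is purely bookkeeping: checking that the block arithmetic in the $\BY^*\BY$ factorization matches the definition of $\BX_\rho$, and that $\BX_\rho$ is Hermitian (an immediate consequence of $\BU^* = \BU^{-1}$) so diagonalizability is unambiguous. No genuinely new idea beyond Lemma~\ref{lm.iv.1} is required; the structural identity $\BX_\rho^2 = \lambda \BX_\rho$ already does essentially all the work.
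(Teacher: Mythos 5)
Your argument is correct, and the primary route matches the paper's. The paper also derives $\kappa \in \{0,\lambda\}$ from $\BX_\rho^2 = \lambda\BX_\rho$ and then uses $\operatorname{tr}(\BX_\rho) = n\lambda$ together with the fact that the trace is the sum of the eigenvalues to conclude the multiplicity of $\lambda$ is $n$; your observation that $\lambda^{-1}\BX_\rho$ is idempotent and that its rank equals its trace is the same counting argument phrased through rank. The minimal-polynomial remark (hence diagonalizability) is correct but not stated in the paper — it is automatic because $\BX_\rho$ is Hermitian, and the paper's trace argument only needs algebraic multiplicity anyway. Your second route, the Gram factorization $\BX_\rho = \BY^*\BY$ with $\BY = \begin{bmatrix}\I & \rho\BU & \cdots & \rho^{\tau-1}\BU^{\tau-1}\end{bmatrix}$, is a genuine alternative the paper does not use: it gives $\operatorname{rank}(\BX_\rho)=\operatorname{rank}(\BY)=n$ directly from the leading identity block, yields positive semidefiniteness of $\BX_\rho$ at no extra cost, and structurally anticipates the form of the eigenvectors $\hat\xi_j$ constructed in Lemma~\ref{lm.iv.3}.
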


\begin{proof} Let $\xi$ be an eigenvector of $\BX_\rho$ with eigenvalue $\kappa$. Then $\BX_\rho \xi = \kappa \xi$, and thus 
\[\BX_\rho^2 \xi = \kappa^2 \xi = (\lambda \BX_\rho) \xi = \kappa\lambda \xi.\]
This implies that either $\kappa=0$ or $\lambda$.

Moreover, $Tr(\BX_\rho)=n(1+\rho^2+\cdots+\rho^{2(\tau-1)})=n\lambda$. But the trace also equals the sum of all the eigenvalues. Since the eigenvalues of $\BX_\rho$ can only be $0$ or $\lambda$, we conclude that the multiplicity of the eigenvalue $\lambda$ is precisely $n$.     
\end{proof}

We now turn to characterize the non-zero \textit{eigenvectors} of $\BX_\rho$. We will then use these vectors to compute the motif matrix of $\BQ_\rho$. Recall $\xi_j$ are the eigenvectors of $\BU$ with the corresponding eigenvalues $s_j$. For each $j$, define
\[
\hat{\xi}_j = \begin{bmatrix} \xi_j \\ \rho s_j^{-1} \xi_j \\ \vdots \\ \rho^{\tau-1} s_j^{-(\tau-1)} \xi_j\end{bmatrix} \in \mathbb{C}^{n \tau}
\]

\begin{lemma}\label{lm.iv.3} Each $\hat{\xi}_j$ is an eigenvector of $\BX_\rho$ with eigenvalue $\lambda$. Moreover, 
$\left\{\frac{1}{\sqrt{\lambda}}\hat{\xi}_j\right\}_{j=1}^n$ 
 is an orthonormal set. 
\end{lemma}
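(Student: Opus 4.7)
The plan is a direct block-by-block computation, using the eigenvalue relation $\BU\xi_j = s_j\xi_j$ together with $|s_j|=1$, followed by a straightforward inner-product calculation exploiting the unitarity of $\BU$.

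First I would verify the eigenvalue claim. View $\BX_\rho$ as a $\tau\times\tau$ block matrix whose $(i,k)$-block is $\rho^{i+k-2}\BU^{k-i}$, and view $\hat\xi_j$ as a block column whose $k$-th block is $\rho^{k-1}s_j^{-(k-1)}\xi_j$. The $i$-th block of $\BX_\rho\hat\xi_j$ is then
\[
\sum_{k=1}^{\tau}\rho^{i+k-2}\BU^{k-i}\cdot\rho^{k-1}s_j^{-(k-1)}\xi_j
= \rho^{i-1}s_j^{-(i-1)}\Bigl(\sum_{k=1}^{\tau}\rho^{2(k-1)}\Bigr)\xi_j,
\]
where I have used $\BU^{k-i}\xi_j=s_j^{k-i}\xi_j$, collected the exponents ($i+k-2+k-1=(i-1)+2(k-1)$ and $(k-i)-(k-1)=-(i-1)$), and recognized the geometric sum as $\lambda$. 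Thus the $i$-th block equals $\lambda\cdot\rho^{i-1}s_j^{-(i-1)}\xi_j$, which is $\lambda$ times the $i$-th block of $\hat\xi_j$. This gives $\BX_\rho\hat\xi_j=\lambda\hat\xi_j$.

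Next I would establish orthonormality. Because $\BU$ is unitary, we may (and do) choose the eigenvectors $\{\xi_j\}_{j=1}^{n}$ to form an orthonormal basis of $\mathbb{C}^n$; this is the point at which I would note explicitly the freedom to diagonalize unitarily, which matters if $\BU$ has repeated eigenvalues. Using $|s_j|=1$ so that $\overline{s_j^{-1}}=s_j$, the inner product decomposes blockwise as
\[
\langle\hat\xi_i,\hat\xi_j\rangle
=\sum_{k=1}^{\tau}\rho^{2(k-1)}\,s_i^{k-1}s_j^{-(k-1)}\,\langle\xi_i,\xi_j\rangle.
\]
For $i\neq j$, orthogonality of $\xi_i,\xi_j$ kills the sum; for $i=j$, the factor $s_j^{k-1}s_j^{-(k-1)}=1$ collapses the sum to $\sum_{k=1}^{\tau}\rho^{2(k-1)}\|\xi_j\|^2=\lambda$. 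Dividing by $\sqrt{\lambda}$ gives the required orthonormal family.

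The only real subtlety is a bookkeeping one, namely keeping the block-matrix indexing consistent so that the exponent manipulation $i+k-2+k-1=(i-1)+2(k-1)$ lines up cleanly and the geometric series in $\rho^2$ factors out; everything else is forced by $\BU$ being unitary and the $s_j^{-(k-1)}$ weights in $\hat\xi_j$ being chosen precisely to cancel the $s_j^{k-i}$ created by $\BU^{k-i}$.
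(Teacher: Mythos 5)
Your proof is correct and follows essentially the same route as the paper's: a block-by-block computation showing each block of $\BX_\rho\hat\xi_j$ picks up the factor $\lambda$, followed by a blockwise inner-product calculation using the orthonormality of the $\xi_j$ and $|s_j|=1$. Your explicit remark that one must \emph{choose} an orthonormal eigenbasis of $\BU$ (relevant when eigenvalues repeat), and your summation over all $\tau$ blocks rather than $n$, are minor tidy-ups of the same argument rather than a different approach.
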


\begin{proof} First, since $\BU$ is unitary, we have $\BU\xi_j = s_j \xi_j$ and $\BU^*\xi_j = s_j^{-1}\xi_j$. Therefore, $\BU^k \xi_j = s_j^{k} \xi_j$ and ${\BU^*}^k \xi_j = s_j^{-k} \xi_j$
for all non-negative integers $k$, or, stated more compactly, 
$\BU^k \xi_j = s_j^{k} \xi_j$ for all integers $k$. 

Expanding $\BX_\rho \hat{\xi}_j$, we observe that its $i$-th block entry equals:
\[\sum_{k=0}^{n-1} \rho^{i+k-1} \BU^{-(i-k-1)} \rho^k s_j^{-k} \xi_j = \sum_{k=0}^{n-1} \rho^{i+2k-1} s_j^{-i+1} \xi_j = \left(\sum_{k=0}^{n-1} \rho^{2k} \right) \rho^{i-1} s_j^{-(i-1)}\xi_j = 
\lambda  \rho^{i-1} s_j^{-(i-1)}\xi_j.
\]
This is precisely $\lambda$ times the $i$-th block entry of $\hat{\xi}_j$.
 This proves that $\BX_\rho \hat{\xi}_j=\lambda \hat{\xi}_j$. 

Since each $\xi_j$ is a unit vector and each $|s_j|=1$, we have that $\|\hat{\xi}_j\|^2=\sum_{k=0}^{n-1} \rho^{2k} = \lambda$. Hence, $\frac{1}{\sqrt{\lambda}}\hat{\xi}_j$ is a vector of norm $1$. 
Finally, eigenvectors $\{\xi_j\}$ of any unitary matrix form an orthonormal basis, so $\langle \xi_i, \xi_j\rangle =\delta_{ij}$. This implies:
\[
\langle \hat{\xi}_i, \hat{\xi}_j\rangle = 
\sum_{k=0}^{n-1} \rho^{2k} s_i^{-k} 
(\overline{s_j})^{-k} \langle \xi_i, \xi_j\rangle,
\]
which is $0$ when $i\neq j$. This proves that 
$\left\{\hat{\xi}_j\right\}$, $j = 1,\ldots, n$, are pairwise orthogonal. 
\end{proof}

Since all the other eigenvalues of $\BX_\rho$ are $0$, we immediately have the desired eigen-decomposition of the matrix $\BX_\rho$ as:
\[\BX_\rho = \frac{1}{\lambda} \sum_{j=1}^n \hat{\xi}_j \hat{\xi}_j^*\]

Recall an $n\times n$ full-cycle permutation matrix is given by:
\[
\BC:=\begin{bmatrix}
 0        & \cdots & \cdots & \cdots & 1      \\
 1        & 0      &        & \cdots & 0 \\
 0        & 1      & 0      & \vdots & \vdots \\
 \vdots   & 0      & \ddots & 0      & \vdots \\
 0        & \vdots & 0      & 1      & 0      
\end{bmatrix}
\]

For the rest of the section, we set \textbf{$\rho=1$ and $\BW = \C$ is a full-cycle permutation} and we will now compute eigen-decomposition of the metric tensor $\BQ=\BQ_1$ of the corresponding reservoir kernel.

From elementary matrix analysis, we know the eigenvalues of a full-cycle permutation $\C$ are precisely the $n$-th root of unities $\{\omega_j = e^{\frac{2\pi i j}{n}}, 1\leq j\leq n\}$. 
Its normalized eigenvectors for each eigenvalue $\omega_j$ is given by the Fourier basis:
\begin{equation}
\label{eq:xi_j}
\xi_j = \frac{1}{\sqrt{n}}\begin{bmatrix} 1 \\ \omega_j \\ \omega_j^2 \\ \vdots \\ \omega_j^{n-1}\end{bmatrix} \in \mathbb{C}^n, \quad j = 1,\ldots, n.
\end{equation}

We will now compute the eigenvalues and eigenvectors of $\BQ$ explicitly. First, we see that
\[\BQ = \hat{\mathbf{w}}^* \BX_1 \hat{\mathbf{w}},\]
where $\BX_1= \frac{1}{n} \sum_{j=1}^n \hat{\xi}_j \hat{\xi}_j^*$. 
Define $\xi_j^* \mathbf{w} = \langle \mathbf{w}, \xi_j\rangle = d_j$. 

While, following Remark \ref{rmk:ntau}, we focused on the case where $\tau=n$, the results of this section up to this point (Lemma \ref{lm.iv.1}, Corollary \ref{cor.iv.2}, and Lemma \ref{lm.iv.3}) hold for general $\tau$.

\begin{theorem}
\label{thm:harmonic}
Consider a linear SCR system $R = (\BC,\Bw,h)$ over $\mathbb{C}$ with dimensions $(n,1,d)$ and with the full cycle permutation matrix $\BC$ as its dynamical coupling. Let $\BQ$ denote  metric tensor of the reservoir kernel under the past horizon $\tau=n$. Then, the normalized eigenvectors $\xi_j$, $j=1,2,...,n$ of $\BC$ (eq. \eqref{eq:xi_j}) are also eigenvectors of $\BQ$ with the corresponding eigenvalues equal to the squared projections of the input coupling vector $\Bw$ onto $\xi_j$, $|d_j|^2 = |\xi_j^* \mathbf{w}|^2$. In other words, $R$ has motifs 
$\xi_j$, with motif weights
$|d_j| = |\xi_j^* \mathbf{w}| = |\langle \mathbf{w}, \xi_j\rangle|$,
$j=1,2,...,n$.
\end{theorem}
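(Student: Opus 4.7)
The plan is to substitute the rank-one decomposition $\BX_1 = \frac{1}{n}\sum_{j=1}^n \hat{\xi}_j\hat{\xi}_j^*$ supplied by Lemma~\ref{lm.iv.3} into the identity $\BQ = \hat{\mathbf{w}}^*\BX_1\hat{\mathbf{w}}$, turning $\BQ$ into an explicit sum of rank-one Hermitian terms whose ``shapes'' are the projected vectors $\hat{\mathbf{w}}^*\hat{\xi}_j$. The heart of the argument is to show that each such projected vector is, up to a scalar of modulus $|d_j|$, exactly the Fourier basis vector $\xi_j$ (or its conjugate), which will immediately pin down both the motifs and the motif weights.

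First I would compute $\hat{\mathbf{w}}^*\hat{\xi}_j$ entry-by-entry. Because $\hat{\mathbf{w}}$ is block-diagonal with $\tau$ copies of $\mathbf{w}$ on the diagonal and the $k$-th block of $\hat{\xi}_j$ is $\omega_j^{-(k-1)}\xi_j$, the $k$-th entry of $\hat{\mathbf{w}}^*\hat{\xi}_j$ equals $\omega_j^{-(k-1)}\mathbf{w}^*\xi_j = \omega_j^{-(k-1)}\overline{d_j}$. Setting $\tau=n$ as justified by Remark~\ref{rmk:ntau}, this $\tau$-vector is precisely $\sqrt{n}\,\overline{d_j}\,\overline{\xi_j}$, by definition \eqref{eq:xi_j} of $\xi_j$. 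Plugging back in yields
\[
\BQ \;=\; \frac{1}{n}\sum_{j=1}^n \bigl(\hat{\mathbf{w}}^*\hat{\xi}_j\bigr)\bigl(\hat{\xi}_j^*\hat{\mathbf{w}}\bigr) \;=\; \sum_{j=1}^n |d_j|^2\,\overline{\xi_j}\,\overline{\xi_j}^{\,*}.
\]

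Since the $n$-th roots of unity are closed under complex conjugation, $\{\overline{\xi_j}\}_{j=1}^n$ is again an orthonormal basis of $\mathbb{C}^n$, so the display above is already a valid spectral decomposition of $\BQ$ with eigenvectors $\overline{\xi_j}$ and eigenvalues $|d_j|^2$. To present this in the form stated in the theorem, I would use the index-reflection identity $\overline{\xi_j} = \xi_{n-j}$ (indices mod $n$), which is immediate from $\overline{\omega_j}=\omega_{n-j}$, to rewrite the eigenpairs as $(\xi_j,\,|d_{n-j}|^2)$. Finally, since the SCR input vector satisfies $\mathbf{w}\in\{-1,+1\}^n\subset\mathbb{R}^n$, a direct computation gives $d_{n-j}=\xi_{n-j}^*\mathbf{w}=\xi_j^\top\mathbf{w}=\overline{d_j}$, so $|d_{n-j}|^2=|d_j|^2$, and the claim follows.

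The main obstacle is really bookkeeping rather than analysis: the Hermitian adjoint $\hat{\mathbf{w}}^*$ introduces a complex conjugation that sends eigenvectors of $\BC$ to their conjugates as eigenvectors of $\BQ$, so a naive reading that would predict $\xi_j$ as eigenvectors directly fails on first pass. Recognising that this conjugation merely permutes the Fourier basis, together with using the real-valuedness of $\mathbf{w}$ to match $|d_{n-j}|=|d_j|$ under that permutation, is the one non-routine step; once these observations are in hand, everything else reduces to orthonormality of the Fourier basis and elementary rank-one algebra.
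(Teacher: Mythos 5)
Your proposal is correct and follows essentially the same route as the paper: push $\hat{\mathbf{w}}^*$ and $\hat{\mathbf{w}}$ through the rank-one spectral decomposition of $\BX_1$ and read off the Fourier structure of $\BQ$. The paper packages this as the factorization $\BQ=\A^*\A$ with $\A=\BD\cdot\BF$, which is just the matrix form of your sum of rank-one terms, so the two arguments coincide. Your treatment is, if anything, a bit more careful on one point: you correctly track the conjugation introduced by $\hat{\mathbf{w}}^*$, obtain $\overline{\xi_j}$ (not $\xi_j$) as the natural eigenvector of the $j$-th rank-one term, and then recover the theorem's phrasing via the reindexing $\overline{\xi_j}=\xi_{n-j}$ together with $|d_{n-j}|=|d_j|$ for real $\mathbf{w}$ — precisely the pairing recorded in Remark~\ref{rmk:pairs}. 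One shared slip worth flagging: since each $\hat{\xi}_j$ has norm $\sqrt{\lambda}$, the spectral decomposition should read $\BX_\rho=\sum_j\hat{\xi}_j\hat{\xi}_j^*$ without the $1/\lambda$ factor; with $\tau=n$ and $\rho=1$ this rescales the eigenvalues of $\BQ$ to $n|d_j|^2$ (consistent with $\operatorname{Tr}\BQ=n\|\mathbf{w}\|^2=n^2$), but it leaves the eigenvectors, and hence the identification of the motifs with the Fourier basis, unchanged.
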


\begin{proof}

Define an $n\times\tau$ matrix,
\[ \A = \frac{1}{\sqrt{n}} \begin{bmatrix} \hat{\xi}_1^* \\
\hat{\xi}_2^* \\
\vdots \\
\hat{\xi}_n^*
\end{bmatrix} \hat{\mathbf{w}}\]
Then $\BQ=\A^*\A$, and we have:
\[\A=\begin{bmatrix}
d_1 & \omega_1^{-1} d_1 & \cdots & \omega_1^{-(\tau-1)} d_1 \\
d_2 & \omega_2^{-1} d_2 & \cdots & \omega_2^{-(\tau-1)} d_2 \\
\vdots & \vdots & \vdots & \vdots \\
d_n & \omega_n^{-1} d_n & \cdots & \omega_n^{-(\tau-1)} d_n \\
    \end{bmatrix} 
 = \BD \cdot \BF
\]
Here, by the assumption that $\tau=n$, we can define two $n\times n$ matrices $\BD=\operatorname{diag}\{d_1, d_2, \cdots, d_n\}$ and 
{$\BF=[\omega_i^{-(j-1)}]$}. Note that $\BF^*$ is the discrete Fourier transform matrix, so we know $\BF^* \BF = \BF\BF^* = n\BI$. 
Therefore,
\[\BQ = \left(\frac{1}{\sqrt{n}}\BF^*\right) \operatorname{diag}\{|d_1|^2, |d_2|^2, \cdots, |d_n|^2\} \left(\frac{1}{\sqrt{n}}\BF\right).\]
Here, $(\frac{1}{\sqrt{n}}\BF)$ is a unitary matrix. The $j$-th column of $(\frac{1}{\sqrt{n}}\BF^*)$, which is precisely $\xi_j$, is the eigenvector for $|d_j|^2$
\end{proof}

\begin{remark}
    When $\tau\neq n$, we can still reach a similar decomposition as above, but the matrix $\BF=[\omega_i^{-(j-1)}]$ is an $n\times \tau$ matrix. When $\tau$ is not an integer multiple of $n$ in particular, its rows may not be orthogonal to each other in general, so it is hard to characterize the eigenvectors of $\BQ$. However, when $\tau$ is an integer multiple of $n$, the rows of $\BF$ are orthogonal to each other. In this case, the eigenvectors of $\BQ$ is given by the $n$ of these $\tau$ dimensional vectors $(\omega_i^{-j})_{j=0}^{\tau-1}$.
\end{remark}

As a result, in the case when $\rho=1$ and the matrix $\BW=\C$ is a full-cycle permutation, the motif matrix for $\BQ$ consists of $\xi_j$, which is precisely the discrete Fourier transform matrix. In other words, the columns of the motif matrix are precisely the Fourier basis. 

\begin{remark}
\label{rmk:pairs}
    Notice that $|d_j|^2 = |d_{n-j}|^2$, so that the eigenvalues and eigenvectors of the motif matrix come in pairs. Here, $\xi_j$ and $\xi_{n-j}=\overline{\xi_j}$ share the same eigenvalue $|d_j|^2=|d_{n-j}|^2$. 
\end{remark}

\section{Motifs of Orthogonal Dynamics with Unit Spectral Radius in Real Domain}
\label{sec:motifcount}

In this section we return to the real domain, i.e. $\mathbb{K} = \mathbb{R}$ and show that at unit spectral radius, the motifs of SCR consists of a fixed number of symmetric and skew symmetric vectors. As in \cite{fong2024universality}, we begin by deriving properties of the motif space of linear reservoir systems 
with
orthogonal dynamical coupling
and then move on to the special case 
of cyclic permutation.

Let $\BW \in \mathbb{M}_{n\times n}\left(\mathbb{R}\right)$ be the dynamical coupling matrix of a reservoir system $R=(\BW,\Bw,h)$ over $\mathbb{R}$. Suppose $\BW$ is orthogonal with spectral radius $\rho = 1$. We now show that the matrix corresponding to the reservoir kernel $\BQ := \BQ_{\rho = 1}$ is Toeplitz.



$\BQ$ is Toeplitz if and only if $\BQ_{ij} = \BQ_{i+1,j+1}$ for all $i,j = 1,\ldots, n-1$. By construction of $\BQ$, this is satisfied if and only if:
\[
    \Bw^\top \left(\BW^\top\right)^{i-1} \BW^{j-1} \Bw = \Bw^\top \left(\BW^\top\right)^{i} \BW^{j} \Bw .
\]
Now, orthogonality of $\BW$ implies $\BW^\top \BW = \BI$. Without loss of generality assume that $i\ge j$.
Then
\[
\left(\BW^\top\right)^{i-1} \BW^{j-1} = \left(\BW^\top\right)^{i-j}
=
\left(\BW^\top\right)^{i} \BW^{j},
\]
showing that $\BQ$ is indeed Toeplitz.


Let $\BJ$ denote the exchange matrix with $1$ on the antidiagonal and $0$ everywhere else:
\[
\BJ:=\begin{bmatrix}
0 & 0 & 0 & \cdots & 0 & 1 \\
0 & 0 & 0 & \cdots & 1 & 0 \\
0 & 0 & \cdots & 1 & 0 & 0 \\
\vdots & \vdots & \iddots & \iddots & \vdots & \vdots \\
0 & 1 & 0 & \cdots & 0 & 0 \\
1 & 0 & 0 & \cdots & 0 & 0
\end{bmatrix}
\]

\begin{lemma}
\label{lemma.centrosymmetric}
Symmetric Toeplitz square matrices are centrosymmetric.
\end{lemma}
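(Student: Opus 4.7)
The plan is to verify centrosymmetry $\BJ \BA \BJ = \BA$ by a direct entrywise calculation, using the two hypotheses in their most compact index form. Concretely, recall that a Toeplitz matrix is determined by a sequence $\{t_k\}_{k=-(n-1)}^{n-1}$ via $A_{ij} = t_{i-j}$ (entries are constant along each diagonal), and that the symmetry $\BA = \BA^\top$ then translates precisely into the parity condition $t_k = t_{-k}$ for every admissible $k$.

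Next, I would unpack what $\BJ \BA \BJ$ does in indices. Since $\BJ$ acts on the standard basis by $\Be_i \mapsto \Be_{n+1-i}$, conjugation by $\BJ$ swaps row $i$ with row $n+1-i$ and column $j$ with column $n+1-j$, so $(\BJ \BA \BJ)_{ij} = A_{n+1-i,\, n+1-j}$. Thus, centrosymmetry of $\BA$ is equivalent to the identity
\[
A_{ij} \;=\; A_{n+1-i,\, n+1-j} \qquad \text{for all } 1 \le i,j \le n.
\]

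Now I combine the two observations. By the Toeplitz structure,
\[
A_{n+1-i,\,n+1-j} \;=\; t_{(n+1-i)-(n+1-j)} \;=\; t_{j-i} \;=\; t_{-(i-j)},
\]
and by the symmetry-induced parity $t_k = t_{-k}$, this equals $t_{i-j} = A_{ij}$. This settles the claim. I expect no genuine obstacle: the only thing to watch is the index bookkeeping around $n+1-i$, and verifying that the parity condition $t_k = t_{-k}$ really is equivalent to $\BA = \BA^\top$ (which follows since $A_{ij}=A_{ji}$ reads $t_{i-j}=t_{j-i}$ on the Toeplitz level). Everything else is immediate from the definitions of $\BJ$ and of a Toeplitz matrix.
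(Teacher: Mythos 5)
Your proposal is correct and follows essentially the same route as the paper: both verify centrosymmetry entrywise via the identity $A_{n+1-i,\,n+1-j}=t_{j-i}=t_{i-j}=A_{ij}$, using the Toeplitz index structure together with the parity $t_k=t_{-k}$ coming from symmetry. The only difference is cosmetic (you phrase centrosymmetry as $\BJ\BA\BJ=\BA$ and spell out the action of $\BJ$, while the paper checks the index identity directly).
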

\begin{proof}

Let $\BT$ denote a symmetric Toeplitz $\tau\times \tau$ matrix. 
Let $\left\{t_k\right\}_{k = 0}^{\tau -1}$ denote the generating sequence of $\BT$ such that $\BT$ is expressed as: 

\[\BT=\begin{bmatrix}
 t_0      & t_1     & \cdots & t_{\tau-2}& t_{\tau-1}\\
 t_1      & t_0     & \cdots & t_{\tau-3}& t_{\tau-2}\\
 t_2      & t_1     & \ddots & \vdots & \vdots \\
 \vdots   & \vdots  & \cdots & t_0    & t_1    \\
 t_{\tau-1}  & t_{\tau-2} & \cdots & t_1    & t_0 
\end{bmatrix}.
\]

Consider:
\begin{align*}
    \BT_{\tau+1-i, \tau+1-j} &= t_{\tau + 1 - i - \tau - 1 + j}\\
     &= t_{j - i} = t_{i = j} = \BT_{ij}.
\end{align*}
The shows that $\BT$ satisfies the definition of centro-symmetry. The second last equality is given by symmetry of $\BT$ and the last equality is due to $\BT$ being Toeplitz. 



\end{proof}


\begin{corollary}
    Let $\BW$ be the dynamical coupling matrix of a reservoir system $R=(\BW,\Bw,h)$. If $\BW$ is orthogonal with spectral radius $\rho = 1$, then $\BQ$ is symmetric centrosymmetric.
\end{corollary}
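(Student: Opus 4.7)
The plan is to observe that this corollary is essentially an immediate synthesis of two facts already established in the surrounding discussion. First, I would verify that $\BQ$ is symmetric, which is a property inherent to the metric tensor construction and requires no orthogonality hypothesis: since each entry $\BQ_{ij}=\Bw^\top(\BW^\top)^{i-1}\BW^{j-1}\Bw$ is a scalar, it equals its own transpose $\Bw^\top(\BW^\top)^{j-1}\BW^{i-1}\Bw=\BQ_{ji}$. Second, I would invoke the paragraph immediately preceding Lemma~\ref{lemma.centrosymmetric}, which uses $\BW^\top\BW=\BI$ (i.e., orthogonality with $\rho=1$) to show that $(\BW^\top)^{i-1}\BW^{j-1}$ depends only on $i-j$, forcing $\BQ_{ij}=\BQ_{i+1,j+1}$ and hence $\BQ$ to be Toeplitz.

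With these two properties in hand, the conclusion follows directly from Lemma~\ref{lemma.centrosymmetric}: every symmetric Toeplitz matrix is centrosymmetric, so $\BQ$, being both symmetric and Toeplitz, is symmetric centrosymmetric. The proof is therefore a one-line application of the lemma, and no new calculation is required beyond restating the two observations above.

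There is no real obstacle here; the statement is a packaging corollary meant to bridge Section~\ref{sec:motifcount} toward the subsequent analysis of the eigenstructure (where symmetric centrosymmetric matrices are known to admit a canonical split into symmetric and skew-symmetric eigenvectors with respect to the exchange matrix $\BJ$, yielding the counts $\lceil n/2\rceil$ and $\lfloor n/2\rfloor$). If anything, the only mild care required is to make explicit that $\BQ$ is symmetric \emph{without} needing orthogonality of $\BW$, so the orthogonality hypothesis is used only for the Toeplitz property. I would present the proof in essentially two sentences, with the understanding that the corollary's value lies not in its difficulty but in setting up the $\BJ$-commutation argument used in the next step to enumerate symmetric and skew-symmetric motifs.
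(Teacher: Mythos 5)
Your proposal is correct and follows the paper's own argument exactly: symmetry of $\BQ$ holds by construction, orthogonality with $\rho=1$ gives the Toeplitz property via $\BW^\top\BW=\BI$, and Lemma~\ref{lemma.centrosymmetric} (symmetric Toeplitz implies centrosymmetric) finishes the proof. Your added remark that orthogonality is only needed for the Toeplitz part is a fair clarification but not a departure from the paper's route.
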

\begin{proof}
    $\BQ$ is symmetric by construction. Moreover, when $\BW$ is orthogonal with spectral radius $1$, $\BQ$ is Toeplitz and hence centrosymmetric by lemma \ref{lemma.centrosymmetric}.
\end{proof}

\begin{definition}
Let $\mathbb{K}$ be an arbitrary field. A $\tau$-dimensional vector $v$ over $\mathbb{K}$ is \textbf{symmetric} if:
\[
\BJ v = v.
\]
Similarly, $v$ is \textbf{skew-symmetric} if:
\[
\BJ v = -v.
\]
\end{definition}

By \cite{CANTONI1976275}: symmetric centrosymmetric matrices of order $\n$ admits an orthonormal basis of eigenvectors with:
\begin{enumerate}
    \item $\lceil\frac{n}{2}\rceil$ symmetric eigenvectors, 
    \item $\lfloor \frac{n}{2} \rfloor$ skew-symmetric eigenvectors
\end{enumerate}

\subsection{Motif structure of cyclic permutation dynamics with unit spectral radius}

Following Remark~\ref{rmk:ntau}, we focus on the case where $\tau = n$ below. In the special case when the dynamical coupling is a cyclic permutation, the corresponding matrix $\BQ$ is \textit{circulant}. A circulant matrix is a specific type of Toeplitz matrix in which each row is a cyclic shift of the previous row, with elements shifted one position to the right.

The cyclic permutation can equivalently be expressed as the linear map:
    \begin{align}
    \label{eqn.c}
        \BC: \mathbb{R}^n &\rightarrow \mathbb{R}^n \nonumber \\ 
        \left(a_k\right)_{k=1}^n &\mapsto \left(b_k\right)_{k=1}^n := \left(a_{k-1 \pmod n}\right)_{k=1}^n
    \end{align}
    Then by construction we have:
    \begin{enumerate}[label=\textbf{C.\arabic*}]
        \item \label{c.1} $\BC^\alpha \cdot \BC^\beta = \BC^{\alpha + \beta \pmod n}$ .
        \item \label{c.2} $\left(\BC^\top\right)^i = \BC^{n-i \pmod n}$ 
    \end{enumerate}
\begin{theorem}
\label{thm:circulantQ}
    Let $\BW \in \mathbb{M}_{n\times n}\left(\mathbb{R}\right)$ be the dynamical coupling matrix of a reservoir system $R=(\BW,\Bw,h)$. If $\BW$ is a cyclic permutation with spectral radius $\rho = 1$. Then the 
metric tensor $\BQ_1 := \BQ_{\rho = 1}$ of
the reservoir kernel  is circulant. Moreover, there exists an orthogonal basis such that $R$ has $\lceil\frac{n}{2}\rceil$ symmetric motifs and $\lfloor \frac{n}{2} \rfloor$ skew-symmetric motifs.
\end{theorem}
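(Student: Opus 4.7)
The proof splits naturally into two parts that mirror the two claims of the theorem: first, that $\BQ_1$ is circulant, and second, that the eigenvectors can be chosen to split into $\lceil n/2 \rceil$ symmetric and $\lfloor n/2 \rfloor$ skew-symmetric vectors.

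For the circulant structure, the plan is to compute the $(i,j)$-entry of $\BQ_1$ directly from \eqref{eqn:Q} and show it depends only on $(j-i) \bmod n$. Combining properties \ref{c.1} and \ref{c.2}, one has $(\BC^\top)^{i-1}\BC^{j-1} = \BC^{-(i-1)\bmod n}\,\BC^{j-1} = \BC^{(j-i)\bmod n}$, so
\[
Q_{ij} = \Bw^\top \BC^{(j-i)\bmod n} \Bw =: c_{(j-i)\bmod n},
\]
which is the defining property of a circulant matrix. This is essentially bookkeeping with the cyclic exponent; no real obstacle.

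For the symmetric/skew-symmetric motif split, the plan is to chain together the results already available. Every circulant matrix is Toeplitz, and $\BQ_1$ is symmetric by construction as a Gram matrix (in fact as the metric tensor of an inner product), so Lemma~\ref{lemma.centrosymmetric} gives that $\BQ_1$ is symmetric centrosymmetric. Then I would invoke the Cantoni--Butler result \cite{CANTONI1976275} already cited in the preceding section: a symmetric centrosymmetric matrix of order $n$ admits an orthonormal eigenbasis with exactly $\lceil n/2 \rceil$ symmetric eigenvectors and $\lfloor n/2 \rfloor$ skew-symmetric eigenvectors (with respect to the exchange matrix $\BJ$). Restricting this orthogonal eigenbasis to eigenvectors with strictly positive eigenvalues yields the claimed motifs of $R$.

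The main conceptual point to get right is not a hard calculation but a careful reading of the definitions: motifs are eigenvectors of $\BQ_1$ with positive eigenvalues, so when eigenspaces are degenerate, one must exploit the freedom in choosing the eigenbasis to align it with the $\BJ$-symmetric/skew-symmetric decomposition. Cantoni--Butler guarantees precisely that such an aligned basis exists, so no additional argument is needed. The slight subtlety, which I would flag in passing, is that if $\BQ_1$ is rank-deficient, the counts $\lceil n/2 \rceil$ and $\lfloor n/2 \rfloor$ refer to the full orthogonal eigenbasis of $\BQ_1$, and the set of motifs is then a subset respecting the symmetric/skew-symmetric partition.
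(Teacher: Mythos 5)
Your proposal is correct and follows essentially the same route as the paper: both establish that $Q_{ij}=\Bw^\top\BC^{(j-i)\bmod n}\Bw$ via properties \ref{c.1}--\ref{c.2} (the paper phrases this as each row being a cyclic shift of the first, which is equivalent), and both then pass through Toeplitz and Lemma~\ref{lemma.centrosymmetric} to symmetric centrosymmetry and invoke Cantoni--Butler. Your closing remark about the rank-deficient case is a fair caveat that the paper's statement and proof gloss over, but it does not change the argument.
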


\begin{proof}
    Let $r_i$ denote the $i^{th}$ row of the matrix $\BQ := \BQ_1$. It suffices to show that $r_i = \BC^{i-1} \cdot r_1$. By construction, for $j  =1,\ldots, n$:
    \begin{align*}
        \left(r_1\right)_j & = \Bw^\top \BC^{j-1} \Bw \\
        \left(r_i\right)_j & = \Bw^\top \left(\BC^\top\right)^{i-1} \BC^{j-1} \Bw 
    \end{align*}
    By \ref{c.1} and \ref{c.2}, the $j^{th}$ component of $r_i$ can be rewritten as:
    \begin{align}
    \label{eqn.rij}
        \left(r_i\right)_j & = \Bw^\top \left(\BC^\top\right)^{i-1} \BC^{j-1} \Bw \nonumber \\
                & = \Bw^\top \BC^{n-i+1} \BC^{j-1} \Bw= \Bw^\top \BC^{j-i \pmod n} \Bw
    \end{align}
    By Equation \ref{eqn.c} and \ref{c.1}, the $j^{th}$ component of $\BC^{i-1} \cdot r_1$ can be written as:
    \begin{align*}
        \BC^{i-1}\left(r_1\right)_j & = \Bw^\top \BC^{j-(i-1)-1 \pmod n} \Bw = \Bw^\top \BC^{j-i \pmod n} \Bw
    \end{align*}
    which coincides with Equation \ref{eqn.rij} and thus $\BQ_1$ is circulant.

    Circulant matrices are Toeplitz. Since $\BQ_1$ is symmetric circulant, it is symmetric centrosymmetric of order $n$. Thus by \cite{CANTONI1976275}, $\BQ_1$ admits an orthonormal basis spanning the space of space of motif consisting of $\lceil\frac{n}{2}\rceil$ symmetric eigenvectors of $\BQ_1$ and $\lfloor \frac{n}{2} \rfloor$ skew-symmetric eigenvectors of $\BQ_1$.
\end{proof}

\section{Linear SCR at unit spectral radius is Fourier transform}
\label{sec:fourier_basis}

This section bridges the theoretical results and numerical experiments by explicitly constructing the Fourier basis matrix corresponding to a linear SCR over $\mathbb{R}$. We present numerical simulations to validate our theoretical findings and outline the components of the numerical experiments in the final section.

Combining the results of the previous sections, we conclude that: At unit spectral radius, the motifs of linear SCR over $\mathbb{R}$ with $n$ neurons of look back window $\tau = n$ are:

\begin{enumerate}[label=\textbf{R.\arabic*}]
    \item \label{r.1} Harmonic, i.e. they correspond to the Fourier basis (Theorem~\ref{thm:harmonic}).
    \item \label{r.2} There are $\lceil\frac{n}{2}\rceil$ symmetric vectors (cosines) and $\lfloor \frac{n}{2} \rfloor$ skew-symmetric vectors (sines) with positive eigenvalues. (Theorem~\ref{thm:circulantQ}).
\end{enumerate}

We now demonstrate the explicit construction of the Fourier basis matrix corresponding to the motifs of a linear SCR at unit spectral radius. 

For a linear reservoir system over $\mathbb{R}$, the motifs must also be real. By conditions \ref{r.1} and \ref{r.2}, each motif must then correspond to either the real \textit{or} imaginary parts of the Fourier basis. More precisely, for $ k = 0, \ldots, \lceil \frac{n}{2} \rceil$ and  $i = 0, \ldots, \tau - 1$ (The ceiling function accounts for Theorem~\ref{thm:circulantQ}.), the real Fourier basis matrix \( \BF = \BF[i, j] \in \mathbb{M}_{\tau \times n}(\mathbb{R}) \) is defined as

\begin{align}
    \label{eqn:fourier}
    \begin{split}
    \BF[i,2k] &= \sqrt{\frac{2}{\tau}} \cos{\frac{2\pi k i}{\tau}} \quad \ldots \quad \text{Even columns} \\
    \BF[i,2k+1] &= \sqrt{\frac{2}{\tau}} \sin{\frac{2\pi k i}{\tau}} \quad \ldots \quad \text{Odd columns} 
    \end{split}
\end{align}

We claim that the motifs of a linear SCR are exactly the first $n$ columns of $\BF$.

In Figure~\ref{fig:fourier_spectra} and Figure~\ref{fig:motif_fourier}, we present the Fourier analysis of motifs of the SCR at unit spectral radius when $\tau = n = 97$.

In Figure~\ref{fig:motif_fourier}, we present examples of $4$ randomly chosen motifs and their corresponding Fourier basis in $\BF$. While some motifs and their corresponding Fourier basis are shifted by a fixed phase of $\pi$ or reflected over the $x$-axis, their Fourier spectra align as illustrated in Figure~\ref{fig:fourier_spectra}. Furthermore, Figure~\ref{fig:fourier_spectra}, we observe that the eigenvalues come in pairs, as discussed in \ref{rmk:pairs}. The motifs may not be strictly harmonic in the classical sense due to the coarseness of the discretization grid (non-integer division of period) (See Remark~\ref{rmk:grid} below).

\begin{figure}
  \begin{subfigure}[t]{.45\textwidth}
    \centering
    \includegraphics[width=\linewidth]{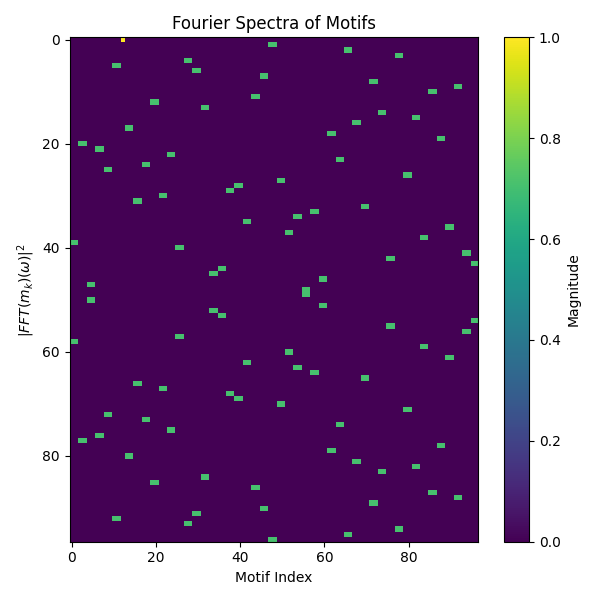}
    \caption{Column-wise FFT of motifs of linear SCR with unit spectral radius.}
    \label{fig:spectra_motif}
  \end{subfigure}
  \hfill
  \begin{subfigure}[t]{.45\textwidth}
    \centering
    \includegraphics[width=\linewidth]{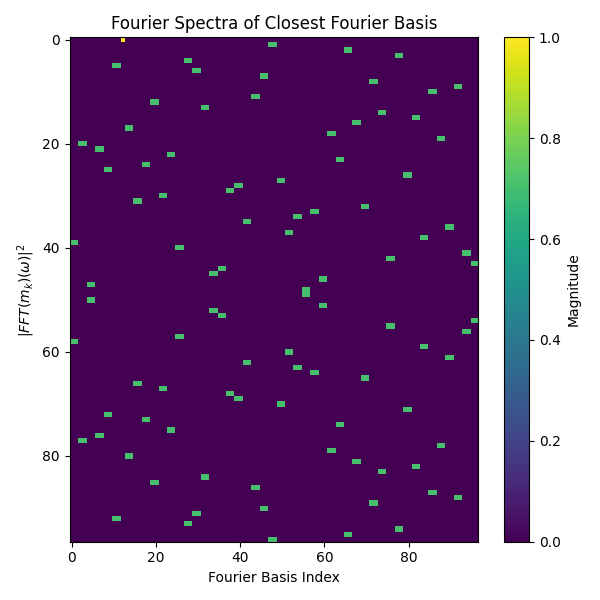}
    \caption{Column-wise FFT of first $n$ columns of $\BF$.}
    \label{fig:spectra_fourier}
  \end{subfigure}

  \medskip

  \begin{subfigure}[t]{.45\textwidth}
    \centering
    \includegraphics[width=\linewidth]{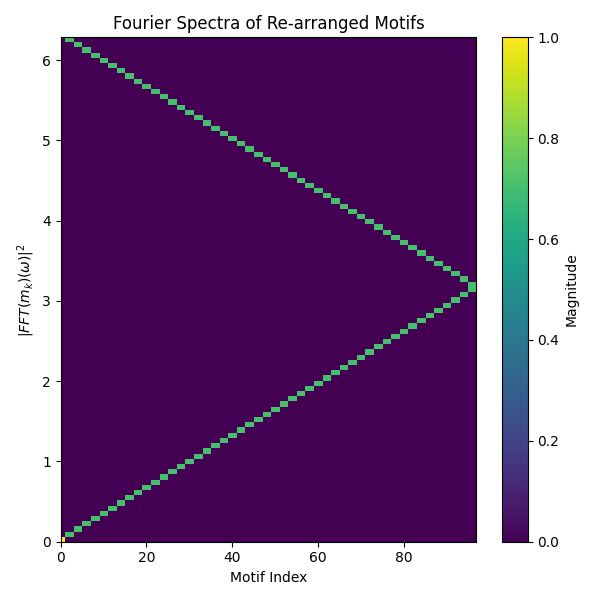}
    \caption{Columns of Fig~\ref{fig:spectra_motif} rearranged according to largest Fourier spectra.}
    \label{fig:rearranged_motif}
  \end{subfigure}
  \hfill
  \begin{subfigure}[t]{.45\textwidth}
    \centering
    \includegraphics[width=\linewidth]{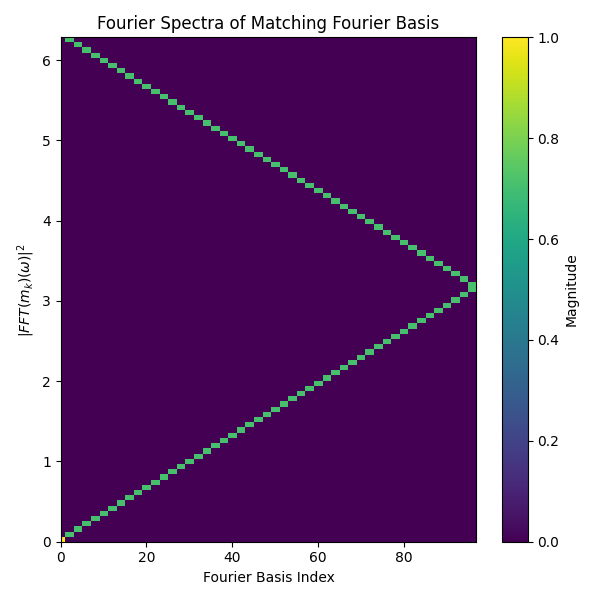}
    \caption{Columns of Fig~\ref{fig:spectra_fourier} rearranged with same indices as Fig~\ref{fig:rearranged_motif}.}
  \end{subfigure}
  \caption[]{Column-wise FFT of motifs of linear SCR with $\rho = 1$ and the column-wise FFT of $\BF$. The first row shows the Fourier spectra in the original form and the second row has their columns rearranged with the same shuffling indices.}
\label{fig:fourier_spectra}
\end{figure}

\begin{figure}[ht!]
	\centering	{\includegraphics[width=\textwidth]{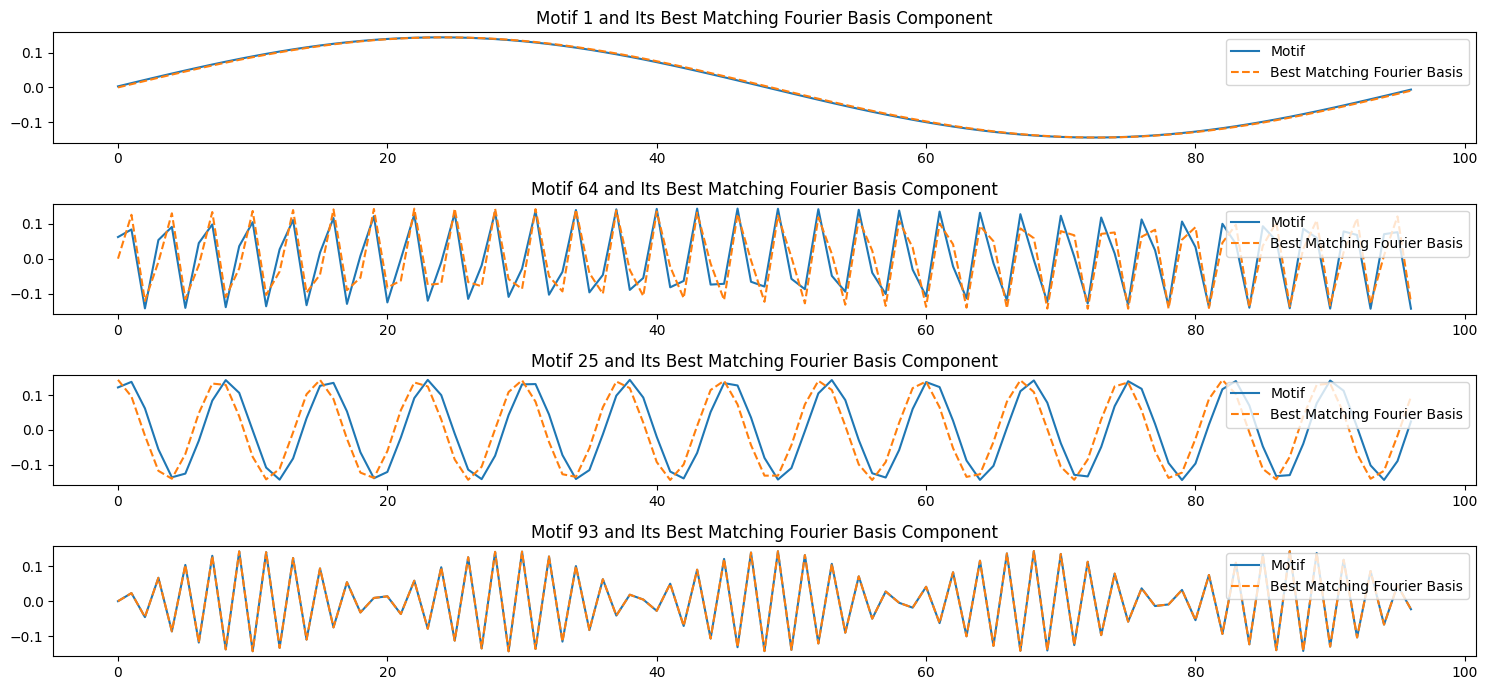}}
	\caption{Example of 4 randomly chosen motifs and their corresponding Fourier basis. Notice some are off by a phase of $\pi$.}
	\label{fig:motif_fourier}
\end{figure}

\clearpage

\begin{remark}
\label{rmk:grid}
In practice, motifs represent harmonic functions sampled at specific frequencies. This explains why certain motifs may not visually appear as harmonic functions (e.g., motif 93 in Figure~\ref{fig:motif_fourier}), yet their Fourier spectra reveal characteristics consistent with harmonic functions (see Figure~\ref{fig:fourier_spectra}). The effect of sampling is further demonstrated in Figure~\ref{fig:compare_sampling}, which compares the 93rd Fourier basis generated according to Equations~\ref{eqn:fourier} at two different sampling sizes.

On the first plot the sine curve is being sampled a high frequency with 500 sampling points; whereas the sampling points are reduced to $97 = n = \tau$ on the second plot. On the third plot we see that they are generated by the exact same function but with different sampling frequency. Notice that the first plot appears to be harmonic but the second one does not. 

Therefore, While this function may not visually resemble a harmonic function at the default sampling size of $n = 97$, increasing the sampling size to $500$ makes the function progressively align with the visual properties of a harmonic function.

The reason the Fourier spectra align with those of harmonic functions in the computational process is due to the matching of sampling frequencies within the FFT algorithm.
\end{remark}

\begin{figure}[ht!]
	\centering	{\includegraphics[width=\textwidth]{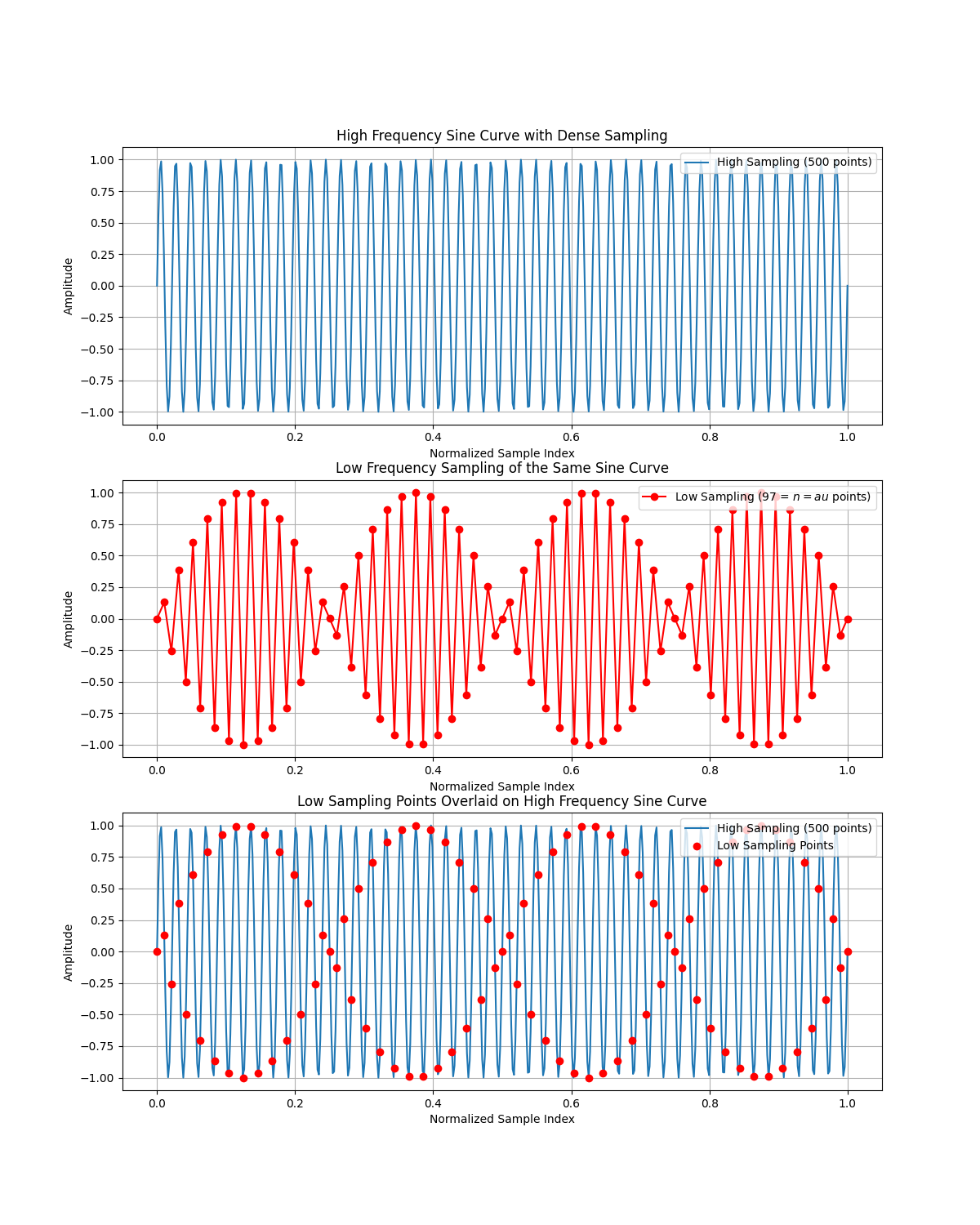}}
	\caption{Plots of Fourier basis number $93$ constructed under Equations~\ref{eqn:fourier} sampled under two different frequencies. }
	\label{fig:compare_sampling}
\end{figure}

\clearpage

\section{Illustrative Examples on Time Series Forecasting}
\label{sec:exp}
We conclude the paper with numerical experiments illustrating our findings. We compare time series forecasting results using the so-called Reservoir Motif Machines (RMM) \cite{tino2024} and the linear SCR. RMM is a simple time series forecasting method based on the feature space representation of time series derived from linear reservoirs and has demonstrated remarkable predictive performance, even surpassing that of more complex transformer models on univariate time series forecasting tasks. A brief exposition of RMM is presented in Section~\ref{sec:linear_motifs}.

We use RMM because it allows us to explicitly define the feature space representation by imposing a set of motifs, rather than having the feature space representation implicitly defined as in classical ESNs, such as SCR. This provides an ideal platform to showcase our theoretical findings: that the feature space representation of the motif space of a linear SCR is the same as that defined by the Fourier basis matrix $\BF$ in Equation \ref{eqn:fourier}.

For reproducibility of the experiments, all experiments are CPU-based and are performed on Apple M3 Max with 128GB of RAM. The source code and data of the numerical analysis is openly available at \texttt{https://github.com/Lampertos/motif\_Fourier}.

We compare the prediction results on univariate time series forecasting across the following models:
\begin{enumerate}
    \item Lin-RMM with SCR motifs and unit spectral radius,
    \item Lin-RMM with Fourier basis motifs as discussed in Section~\ref{sec:fourier_basis}, and
    \item Linear SCR with unit spectral radius.
\end{enumerate}

It is essential to emphasize that this experiment is not intended to showcase the predictive capability of the models, but rather to highlight the similarities in feature space representation between the linear SCR model over $\mathbb{R}$ and the Fourier basis motifs introduced in Section~\ref{sec:fourier_basis}. Consequently, hyperparameters are kept constant throughout the experiments to underscore this feature space comparison. 

The fixed set of hyperparameters for all experiments are as follows:
$r_{\text{in}} = 0.05$ for input weights, $n = 97$ for the number of reservoir neurons, and $\tau = 2n = 194$ for the look-back window length. All models are trained using ridge regression with a ridge coefficient of $10^{-3}$. The prediction horizon is set to $168$.

\subsection{Datasets}
To facilitate the comparison of our results with the state-of-the art, we have used
the same datasets and the same experimental protocols used in the recent time series forecasting
papers \cite{zhou2021informer}. Those are briefly described below for the sake of
completeness.

\paragraph{ETT}
The Electricity Transformer Temperature dataset\footnote{\texttt{https://github.com/zhouhaoyi/ETDataset}}consists of measurements of oil temperature
and six external power-load features from transformers in two regions of China. The data was recorded for two years, and measurements are provided either hourly (indicated by 'h') or every $15$ minutes (indicated by 'm'). In this paper we used \texttt{oil temperature} of the \texttt{ETTh1}, \texttt{ETTm1} dataset for univariate prediction with train/validation/test split being $12$/$4$/$4$ months.

\paragraph{ECL}
The Electricity Consuming Load%
\footnote{\texttt{https://archive.ics.uci.edu/dataset/321/ \\ electricityloaddiagrams20112014}} consists of
hourly measurements of electricity consumption in kWh for 321 Portuguese clients during two years.
In this paper we used client \texttt{MT 320} for univariate prediction. The train/validation/test split is 15/3/4 months.

\paragraph{Weather}
The Local Climatological Data (LCD) dataset%
\footnote{https://www.ncei.noaa.gov/data/local-climatological-data/} consists of hourly measurements
of climatological observations for 1600 weather stations across the US during four years.
The dataset was used for  univariate prediction of the \texttt{Wet Bulb Celcius} variable.

\subsection{Discussion}

From Figure~\ref{fig:performance} and Figure~\ref{fig:mse_rmm} we see that there's virtually no difference between Lin-RMM with unit spectral radius SCR motifs and Lin-RMM with Fourier basis. This confirms our observations in the previous sections. Furthermore Figure~\ref{fig:performance} also confirms that Lin-RMM with unit spectral radius SCR motifs (red bar) has superior performance against classical SCR with unit spectral radius, this affirms the studies in \cite{tino2024}.

\begin{figure}[ht!]
	\centering	{\includegraphics[width=\textwidth]{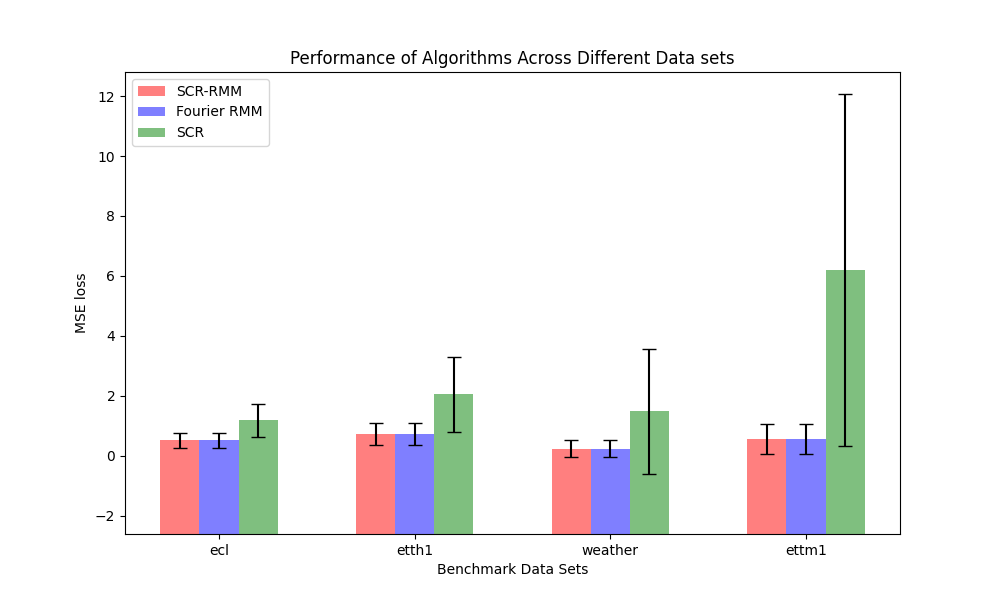}}
	\caption{Statistics of MSE loss of Fourier RMM, unit SCR RMM and unit SCR across different data sets with fixed prediction horizon $168$.}
	\label{fig:performance}
\end{figure}

Amongst RMM's we compare the MSE loss in Figure~\ref{fig:mse_rmm}. Notice that the  difference between the MSE loss between RMM under Fourier motifs and SCR motifs are around $1e-13$ which is negligible compared to the MSE loss of the models against standardized input signals.

\begin{figure}[ht!]
	\centering	{\includegraphics[width=\textwidth]{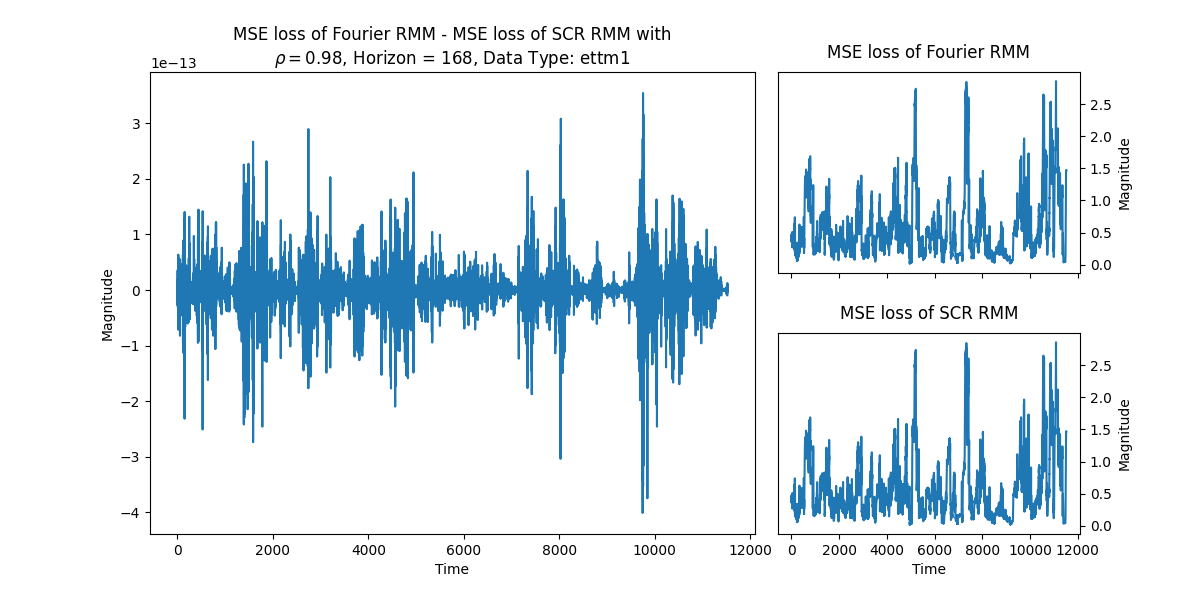}}
	\caption{Comparison of MSE loss between Fourier RMM and unit SCR RMM in \texttt{ETTm2}. }
	\label{fig:mse_rmm}
\end{figure}

\section{Conclusion}
Linear recurrent neural networks (RNN), such as Echo State Networks (ESN) can be thought of as providing feature representations of the input-driving time series in their state space \cite{tino2020dynamical,gonon2022reservoir}.  By endowing the feature (state) space with the canonical dot-product, one can ``reverse engineer" the corresponding inner product in the space of time series (time series kernel) that is defined through the the dot product in the RNN state space \cite{tino2020dynamical}. This in turn helps to shed light on the inner representational schemes employed by the RNN to process the input-driving time series. In particular, the induced (semi-)inner product in the time series space can be theoretically analyzed through eigen-decomposition of the corresponding metric tensor. The eigenvectors (time series motifs) define the projection basis of the induced feature space and the (decay of) eigenvalues its dominant subspace and effective dimensionality. The induced time series kernels by the Simple Cycle Reservoir (SCR) models were shown to be superior (in terms of dimensionality, motif variability, and memory) to several alternative ESN constructions \cite{tino2020dynamical}. 

In this paper we have shown a rather surprising result: When SCR is constructed at the edge of stability, the basis of its induced time series feature space correspond to the well known and widely used basis for signal decomposition - namely the Fourier basis.

This insight also explains the reduction in relative area covered by Fourier representations of SCR motifs observed by \cite{tino2020dynamical}
at the edge of stability. Our results imply that the feature space representation of a linear SCR at unit spectral radius effectively performs a weighted projection onto the Fourier basis. 

This observation is supported by numerical experiments, in which we compared the time series forecasting accuracy of Lin-RMM with the motif space defined by a linear SCR at unit spectral radius and the Fourier basis, respectively.

\clearpage
\newcommand{\etalchar}[1]{$^{#1}$}

\end{document}